\documentclass[10pt,twocolumn,letterpaper]{article}
\usepackage[letterpaper,left=0.75in,right=0.75in,top=0.75in,bottom=1in]{geometry}
\usepackage{mathptmx}
\usepackage{titlesec}
\titleformat{\section}{\centering\normalsize\scshape}{\Roman{section}.}{0.6em}{}
\titleformat{\subsection}{\normalsize\itshape}{\Alph{subsection}.}{0.6em}{}
\titleformat{\subsubsection}{\normalsize\itshape}{\arabic{subsubsection})}{0.6em}{}

\titlespacing*{\section}{0pt}{1.4ex plus .4ex}{0.8ex}
\titlespacing*{\subsection}{0pt}{1.2ex plus .3ex}{0.6ex}
\usepackage{caption}
\usepackage{cite}
\usepackage{microtype}
\usepackage{amsmath,amssymb,amsthm,bm}
\usepackage{booktabs,graphicx,url}
\usepackage[hidelinks]{hyperref}
\usepackage{enumitem}
\setlist{nosep,leftmargin=1.4em}
\theoremstyle{plain}
\newtheorem{proposition}{Proposition}

\theoremstyle{definition}
\newtheorem{definition}{Definition}

\newcommand{\Skstar}{11}
\newcommand{\SoneBad}{421}
\newcommand{\SoneLsMed}{1.33}
\newcommand{\SoneLsNinety}{13.8}
\newcommand{\SoneLsMax}{939}
\newcommand{\SoneUnsupZero}{0.457}
\newcommand{\SoneUnsupTen}{0.0021}
\newcommand{\SoneUnsupTruth}{0.002}
\newcommand{\SoneUnsupFive}{0.0116}

\newcommand{\SoneTruthZero}{0.024}

\newcommand{\StwoFullSix}{0.743}
\newcommand{\StwoHalfSix}{0.087}
\newcommand{\StwoFullEight}{0.371}
\newcommand{\StwoSealedSix}{0.0020}
\newcommand{\StwoMCd}{0.0058}
\newcommand{\StwoMCexact}{0.0051}
\newcommand{\SthreeHalvCovSep}{0.210}
\newcommand{\SthreeAICovSep}{0.919}
\newcommand{\SthreeHalvFWER}{0.010}
\newcommand{\SthreeResetFWER}{0.231}
\newcommand{\SthreeAIFWER}{0.338}
\newcommand{\SthreeAImFDR}{0.027}
\newcommand{\SthreeHalvCovB}{0.211}
\newcommand{\SthreeAICovB}{0.921}

\newcommand{\SfourNs}{312}
\newcommand{\SfourAggOne}{0.849}
\newcommand{\SfourAggOneBig}{1.000}
\newcommand{\SfourReqOne}{31,200}

\newcommand{\SfiveDetEight}{0.452}
\newcommand{\SfiveDelayEight}{490}
\newcommand{\SfiveDelayTwenty}{55}
\newcommand{\SfiveFailDelEight}{100.2}
\newcommand{\SfiveFailMonEight}{84.1}
\newcommand{\SfiveFailDelTwenty}{219.9}
\newcommand{\SfiveFailMonTwenty}{40.6}
\newcommand{\SfivePairedEightAtTen}{+6.9}
\newcommand{\SfivePairedTwentyAtTwenty}{+9.5}
\newcommand{\SfivePairedTwentyAtFifteen}{-37.7}

\newcommand{\TV}{\mathrm{TV}}
\newcommand{\E}{\mathbb{E}}
\newcommand{\Prob}{\mathbb{P}}

\title{\LARGE\bfseries Conformance as a Distribution:\\ \Large Delegation Contracts, Error Budgets, and the Allocation of Evaluation Authority in Use-Based AI Regulation}
\author{John deVadoss\\ \normalsize InterWork Alliance, Washington, DC\\ \normalsize johnd@ieee.org}
\date{}

\begin{document}
\maketitle
\thispagestyle{empty}

\begin{abstract}
\noindent\textbf{\textit{Abstract}---Use-based AI regulation assigns obligations by deployment risk but lacks an evidentiary object capable of carrying those obligations. A benchmark score is a property of a model; conformance of an AI deployment is a failure rate under a declared distribution of contexts, bounded by a scope and subject to expiry. This paper takes the delegation contract of prior work---a frozen configuration, a scoped claim, and an executable revision rule---and studies it under strategic interaction between a deployer, an auditor, and an enforcement point. Five questions organize the analysis: whether a declared context mixture can be made truthful; whether an audit can be made unbiased against a system that recognizes evaluation traffic; whether a certification error budget can sustain a long-lived deployment; whether an aggregate audit can support a per-subgroup guarantee; and whether an incident metric that counts system failures but not fallback costs ranks arrangements correctly. For each we give a mechanism and a proposition: a proper-scoring penalty on the declared mixture bounds the excess deployment failure of a best-responding deployer by $O(\lambda^{-1/2})$ in the penalty weight; a commit-reveal audit drawn from deployment traffic is unbiased by construction and scores the evaluator with the same sample; a halving error budget drives certification coverage below $(1-p)^n$ while $\alpha$-investing sustains it under a marginal false-certification guarantee; a per-subgroup guarantee costs total audit size scaling as $1/m$ in subgroup mass; and any failure-only incident metric prefers withdrawal for every fallback cost, including where total loss ranks the arrangements oppositely. Five preregistered synthetic studies quantify each result. The institutional translation locates enforcement at distribution, capability-access, monetization and risk-transfer choke points, treats expiry of the conformance record as a standard of care, and contrasts conformance-as-a-rate with the static conformity assessment of the EU AI Act. All numerical evidence is synthetic; no language model or production system is evaluated.}

\smallskip\noindent\textbf{\textit{Index Terms}---AI regulation, conformance, delegation, error budgets, audit, liability, subgroup guarantees, proper scoring rules, commitment schemes.}
\end{abstract}

\section{Introduction}
A regulatory regime that assigns obligations by the \emph{use} to which an AI system is put, rather than by the artifact from which it is built, needs an object that can carry an obligation. That object cannot be a benchmark score. A score is a property of a model measured under one protocol; an obligation attaches to a \emph{deployment}: a model together with its prompts, tools, memory, resource limits, fallback and the rule that decides when the system acts. Prior work has argued that such a deployment should be evaluated through a \emph{delegation contract}: a frozen configuration, a scoped claim about its failure rate under a declared mixture of contexts, and an executable rule that withdraws the claim when its assumptions no longer hold \cite{devadoss2026beyond}. That work is single-agent decision theory: an evaluator chooses experiments to minimize decision regret at stated cost.

The present paper studies the same object once a second party enters. A deployer who declares the mixture has an interest in the declaration. A system under audit may recognize audit traffic. A firm that re-enters a deployment many times exhausts any fixed error budget. An aggregate audit says nothing about a small subgroup. An incident regime that counts failures but not the cost of the human fallback rewards the wrong arrangement. None of these is a statistical problem in the ordinary sense; each is a problem of who declares what, who holds which budget, and what the enforcement point can verify. We call the resulting object a \emph{regulatory conformance claim} and give five mechanisms that make it truthful, verifiable, sustainable, fair and enforceable, each with a proposition and a preregistered synthetic study.

The thesis can be stated in one sentence. \emph{Conformance of an AI deployment is a distribution, not a property}: a failure rate under a declared mixture, holding at a declared confidence within a declared scope, until a declared expiry. The paper's contribution is to show what follows institutionally when that definition is taken seriously, and to locate the places where the definition cannot be enforced by statistics alone.

The paper is organized as follows. Section~\ref{sec:related} positions the work. Section~\ref{sec:claim} defines the conformance claim. Section~\ref{sec:mech} gives the five mechanisms and their propositions. Section~\ref{sec:adv} pairs each mechanism with the attack it answers and names the attacks it does not. Section~\ref{sec:exp} reports the studies. Section~\ref{sec:doctrine} gives the institutional and doctrinal translation. Section~\ref{sec:limits} states limitations and a falsifiable next study.

\section{Relationship to Existing Work}\label{sec:related}
\emph{Evaluation validity and agent evaluation.} Validity-centered evaluation \cite{salaudeen2025measurement} and agent evaluation that accounts for cost and reproducibility \cite{kapoor2024agents} connect measurements to claims. The delegation contract \cite{devadoss2026beyond} binds a claim to an executable policy and a revision rule, and introduces two limits we use throughout: no universal ranking under crossing losses, and a regret floor of half the untested deployment mass. We add the strategic layer: who declares, who audits, who pays.

\emph{Risk control and anytime-valid inference.} Learn-then-Test \cite{angelopoulos2021learn} connects risk control to multiple testing; time-uniform confidence sequences \cite{howard2021time}, e-values \cite{vovk2021evalues} and game-theoretic statistics \cite{ramdas2023game} supply sequentially valid monitoring. $\alpha$-investing \cite{foster2008alpha} and $p$-value weighting \cite{genovese2006false} supply budget rules across a sequence of tests. We use these as components and ask how the budget should be allocated between institutions.

\emph{Subgroup guarantees.} Multicalibration \cite{hebertjohnson2018multicalibration} and the audit of fairness gerrymandering \cite{kearns2018preventing} give guarantees over a class of subgroups rather than one. We use their sample-complexity logic to price a per-subgroup regulatory guarantee.

\emph{Strategic behavior under evaluation.} Language models have been shown to behave differently when they infer they are being evaluated or trained \cite{greenblatt2024alignment,vanderweij2024sandbagging}. Our sealed-audit construction is a response to that evidence; it uses a binding, hiding commitment \cite{naor1991bit} to make audit membership unobservable at the time of action.

\emph{Policy frameworks.} Frontier-model regulation \cite{anderljung2023frontier}, safety cases \cite{clymer2024safety}, the NIST AI RMF \cite{nistairmf2023}, the EU AI Act's conformity assessment \cite{euaiact2024} and the use-based alternative in \cite{devadoss2026spectrum} are the regimes against which Section~\ref{sec:doctrine} is written. Liability versus regulation as instruments for safety follows \cite{shavell1984liability}.

\section{The Regulatory Conformance Claim}\label{sec:claim}
Let $\mathcal{C}=\{1,\dots,C\}$ be a finite set of deployment contexts. A \emph{configuration} $a$ fixes the model version, prompts, tools, memory and resource limits; its identity is an immutable digest. Under configuration $a$, a task in context $c$ fails with probability $p_c\in[0,1]$. Failures have unit harm; severity weights are a straightforward extension and are suppressed. The \emph{deployment mixture} $q^*\in\Delta^{C}$ is the distribution of contexts the configuration actually faces; the \emph{declared mixture} $\hat q\in\Delta^C$ is what the deployer files. For any mixture $q$, write $\bar p(q)=\sum_c q_c p_c$.

\begin{definition}[Conformance claim]\label{def:claim}
A conformance claim is a tuple $\kappa=(a,\hat q,\tau,\alpha,\beta,H,\pi_f,\Pi)$: a configuration digest $a$; a declared mixture $\hat q$; a failure threshold $\tau$; a certification level $\alpha$ and a monitoring level $\beta$; a horizon $H$; a fallback $\pi_f$; and a permissible policy class $\Pi$. The claim asserts $\bar p(\hat q)\le\tau$ for configuration $a$, certified at level $\alpha$ by a fresh audit, monitored at level $\beta$ for $H$ arrivals, and withdrawn in favour of $\pi_f$ on alarm, configuration mismatch or expiry.
\end{definition}

Certification uses the one-sided exact upper bound \cite{clopper1934use}: after $k$ failures in $n$ audit trials, $U(k,n)=\mathrm{Beta}^{-1}(1-\alpha;\,k+1,\,n-k)$, and the claim is certified iff $U\le\tau$, equivalently iff $k\le k^*(n,\tau,\alpha)$. Write $P_n(q)=\Prob\{\mathrm{Bin}(n,\bar p(q))\le k^*\}$ for the certification probability when the audit draws contexts from $q$; $P_n$ is strictly decreasing in $\bar p(q)$. The exact test guarantees $P_n(q)\le\alpha$ whenever $\bar p(q)\ge\tau$.

Two facts from \cite{devadoss2026beyond} are used without proof. First, if the audit does not reach a region of deployment mass $\rho$, worst-case regret is at least $\rho/2$ regardless of precision elsewhere. Second, a claim is valid only for the frozen configuration; a mismatch invalidates automatic reuse. The first makes $\hat q$ the regulable quantity; the second makes $a$ the regulable identity.

\section{Five Mechanisms}\label{sec:mech}

\subsection{Truthful Declaration of the Mixture}\label{sec:m1}
The deployer chooses $\hat q$. The audit samples contexts from $\hat q$, so a deployer who shifts mass toward low-failure contexts lowers $\bar p(\hat q)$ and raises $P_n(\hat q)$ without changing the deployment. The enforcement point observes realized traffic $c_1,\dots,c_m\sim q^*$ and charges a penalty $\lambda\,\bar S(\hat q)$ where $\bar S(\hat q)=m^{-1}\sum_i S(\hat q,c_i)$ and $S$ is a strictly proper scoring rule \cite{gneiting2007strictly}; we use the quadratic score $S(\hat q,c)=\sum_j(\hat q_j-\mathbf 1\{c=j\})^2$ \cite{brier1950verification}. The deployer earns $V$ on certification and maximizes
\begin{equation}\label{eq:payoff}
\Pi(\hat q)=V\,P_n(\hat q)-\lambda\,\E_{c\sim q^*}\big[S(\hat q,c)\big].
\end{equation}

\begin{proposition}[Declaration bounds]\label{prop:1}
Let $p_{\max}=\max_c p_c$, $p_{\min}=\min_c p_c$ and $\TV(\hat q,q^*)=\tfrac12\|\hat q-q^*\|_1$.
\begin{enumerate}[label=(\roman*)]
\item (Transfer) $|\bar p(\hat q)-\bar p(q^*)|\le\TV(\hat q,q^*)\,(p_{\max}-p_{\min})$.
\item (Properness) $\E_{c\sim q^*}S(\hat q,c)=\|\hat q-q^*\|_2^2+1-\|q^*\|_2^2$, uniquely minimized at $\hat q=q^*$.
\item (Best response) Every maximizer $\hat q^\lambda$ of \eqref{eq:payoff} satisfies $\|\hat q^\lambda-q^*\|_2\le\sqrt{V/\lambda}$, hence $\TV(\hat q^\lambda,q^*)\le\tfrac12\sqrt{CV/\lambda}$ and
\[
\bar p(q^*)-\bar p(\hat q^\lambda)\ \le\ \tfrac12\,(p_{\max}-p_{\min})\sqrt{CV/\lambda}.
\]
\item (No exact truthfulness) If $P_n$ is differentiable at $q^*$ with $\nabla P_n(q^*)$ not orthogonal to the simplex, then for every finite $\lambda$ there is a profitable deviation from $q^*$; if $P_n$ is $L$-Lipschitz on the simplex, any maximizer in the interior of the simplex satisfies $\|\hat q^\lambda-q^*\|_2\le VL/(2\lambda)$.
\end{enumerate}
\end{proposition}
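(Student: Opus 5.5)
Parts (i) and (ii) are direct calculations; (iii) comes from comparing the deployer's payoff at its best response with its payoff at the truth; (iv) is a first-order perturbation argument plus a stationarity bound.

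For (i), write $\bar p(\hat q)-\bar p(q^*)=\sum_c(\hat q_c-q^*_c)p_c$. The differences $d_c=\hat q_c-q^*_c$ sum to zero, so we may subtract any constant $m$ from every $p_c$:
\[
\bar p(\hat q)-\bar p(q^*)=\sum_c d_c(p_c-m).
\]
Take $m=(p_{\max}+p_{\min})/2$, so that $|p_c-m|\le(p_{\max}-p_{\min})/2$. Then
\[
|\bar p(\hat q)-\bar p(q^*)|\le\tfrac12\|d\|_1\,(p_{\max}-p_{\min})=\TV(\hat q,q^*)(p_{\max}-p_{\min}).
\]

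For (ii), expand the quadratic score:
\[
S(\hat q,c)=\|\hat q\|_2^2-2\hat q_c+1.
\]
Taking expectation under $q^*$ gives $\|\hat q\|_2^2-2\langle\hat q,q^*\rangle+1$. Completing the square yields $\|\hat q-q^*\|_2^2+1-\|q^*\|_2^2$. This is uniquely minimized at $\hat q=q^*$ because the squared norm is strictly convex.

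For (iii), the key step is optimality against the truthful report. Let $\hat q^\lambda$ be any maximizer, so $\Pi(\hat q^\lambda)\ge\Pi(q^*)$. By (ii), the penalty difference between $\hat q^\lambda$ and $q^*$ equals $\lambda\|\hat q^\lambda-q^*\|_2^2$. Since $P_n\in[0,1]$, this gives
\[
\lambda\|\hat q^\lambda-q^*\|_2^2\le V\big(P_n(\hat q^\lambda)-P_n(q^*)\big)\le V,
\]
hence $\|\hat q^\lambda-q^*\|_2\le\sqrt{V/\lambda}$. Cauchy--Schwarz gives $\|\cdot\|_1\le\sqrt C\|\cdot\|_2$, so $\TV\le\tfrac12\sqrt{CV/\lambda}$. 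Combining this with (i) bounds the one-sided gap $\bar p(q^*)-\bar p(\hat q^\lambda)$ by its absolute value. This yields the displayed inequality.

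Part (iv) is the main obstacle, because "not orthogonal to the simplex" must be made precise. I read it as: the projection $g$ of $\nabla P_n(q^*)$ onto the tangent space $\{v:\sum_c v_c=0\}$ is nonzero. One further caveat concerns the boundary. If $q^*$ lies on the boundary of the simplex, the direction $g$ may leave the simplex. There I would restrict to feasible tangent directions, or assume $q^*$ is interior, which I expect the statement implicitly needs.

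With that reading, consider the deviation $\hat q=q^*+tg$ for small $t>0$. By (ii) the penalty increases only by $\lambda t^2\|g\|_2^2$. By differentiability, $P_n$ increases by $t\|g\|_2^2+o(t)$. Hence $\Pi(q^*+tg)-\Pi(q^*)=Vt\|g\|^2-\lambda t^2\|g\|^2+o(t)$, which is positive for small $t$. So $q^*$ is not a maximizer, for any finite $\lambda$.

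For the Lipschitz bound, I would again compare against the truth, but now use the Lipschitz property instead of the crude bound $P_n\le1$. Writing $\delta=\|\hat q^\lambda-q^*\|_2$, optimality gives $\lambda\delta^2\le V(P_n(\hat q^\lambda)-P_n(q^*))\le VL\delta$, hence $\delta\le VL/\lambda$. To reach the sharper $VL/(2\lambda)$, I would use the interior first-order condition instead: $V\nabla_T P_n(\hat q^\lambda)=2\lambda(\hat q^\lambda-q^*)$ on the tangent space, where $\nabla_T$ is the tangential gradient. Taking norms, with $\|\nabla_T P_n\|\le L$, gives $\delta\le VL/(2\lambda)$. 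This step needs differentiability at the maximizer, which Lipschitz continuity gives only almost everywhere. If $P_n$ fails to be differentiable at $\hat q^\lambda$, I would fall back on Clarke subgradients, whose norms are still at most $L$.
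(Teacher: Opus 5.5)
Your proof is correct and follows the paper's argument in each part: midpoint centering of the $p_c$ for (i), completing the square for (ii), comparison of the best response with the truthful report plus $\|x\|_1\le\sqrt C\|x\|_2$ for (iii), and for (iv) a first-order gain against a second-order penalty followed by the interior first-order condition with the tangential gradient bounded by $L$. Your two caveats are real points that the paper's proof passes over, not gaps in your argument. First, the perturbation in (iv) needs a feasible improving direction, and this fails when $q^*$ lies on the boundary and is supported only on minimum-failure contexts. Second, the stationarity step needs differentiability at $\hat q^\lambda$, which your Clarke-subgradient fallback handles; so does a one-sided comparison along the direction toward $q^*$.
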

\begin{proof}
(i) Since $\sum_c(\hat q_c-q^*_c)=0$, for any constant $\mu$, $\bar p(\hat q)-\bar p(q^*)=\sum_c(\hat q_c-q^*_c)(p_c-\mu)$. Take $\mu=(p_{\max}+p_{\min})/2$; then $|p_c-\mu|\le(p_{\max}-p_{\min})/2$ and the sum is bounded by $\|\hat q-q^*\|_1(p_{\max}-p_{\min})/2$. (ii) Expand $S$ and take expectations: $\E S=\sum_j\hat q_j^2-2\sum_j\hat q_jq^*_j+1=\|\hat q-q^*\|_2^2+1-\|q^*\|_2^2$. (iii) At a maximizer, $\Pi(\hat q^\lambda)\ge\Pi(q^*)$, i.e.\ $\lambda\|\hat q^\lambda-q^*\|_2^2\le V(P_n(\hat q^\lambda)-P_n(q^*))\le V$. The $\TV$ bound follows from $\|x\|_1\le\sqrt C\|x\|_2$ on $\mathbb R^C$, and the last display from (i) together with the fact that a profitable deviation lowers $\bar p$. (iv) The gain from a deviation $\hat q=q^*+\epsilon v$ along a feasible direction $v$ is first-order in $\epsilon$ while the penalty is second-order, so for small $\epsilon$ the deviation is profitable at any finite $\lambda$. At an interior maximizer the first-order condition $2\lambda(\hat q^\lambda-q^*)=V\,\Pi\nabla P_n(\hat q^\lambda)$ holds with $\Pi$ the projection onto the tangent space of the simplex, and $\|\Pi\nabla P_n\|_2\le L$.
\end{proof}

The institutional content is in (iii) and (iv). A scoring-rule penalty cannot buy exact truthfulness at finite cost, but it bounds the \emph{excess deployment failure} that a certified claim can hide, by a quantity that shrinks as $\lambda^{-1/2}$ in general and as $\lambda^{-1}$ under smoothness. The regulator is therefore buying a bound on unsupported transfer, not a guarantee of honesty, and should set $\lambda$ by the excess failure it is willing to tolerate. The realized mixture is observable at exactly the choke points where \cite{devadoss2026spectrum} locates enforcement---cloud and platform logs, app-store telemetry---which is what makes the penalty assessable.

\subsection{Sealed Audits by Commitment}\label{sec:m2}
A fresh audit removes selection bias but not strategic bias: a system that can tell audit traffic from deployment traffic can behave better on the former \cite{greenblatt2024alignment,vanderweij2024sandbagging}. Model the \emph{distinguishable channel} as follows: audit samples are generated separately from deployment, each is recognized as audit traffic with probability $d$, and a recognized sample fails with probability $p(1-\varepsilon)$ instead of $p$. The \emph{sealed channel} instead draws the audit as a uniformly random subset $I$ of size $n$ from the $N$ deployment arrivals; the auditor publishes a binding, hiding commitment $\mathrm{com}=\mathrm{Commit}(I,r)$ before the first arrival \cite{naor1991bit}, outcomes for $i\in I$ are adjudicated after the fact, and $(I,r)$ is revealed with the audit record.

\begin{proposition}[Sealed audit]\label{prop:2}
Assume the system's action at arrival $i$ depends only on its own observations up to $i$ and on public information, and that the commitment is hiding and binding.
\begin{enumerate}[label=(\roman*)]
\item On the sealed channel, $\mathbf 1\{i\in I\}$ is independent of the system's view at $i$; hence each audited outcome has failure probability equal to the deployment failure probability, the audit estimator $K/n$ is unbiased for $\bar p(q^*)$, and the certification error is at most $\alpha$ under the null $\bar p(q^*)\ge\tau$.
\item On the distinguishable channel, the false-certification probability equals $\Prob\{\mathrm{Bin}(n,p(1-d\varepsilon))\le k^*\}$, which is not controlled by $\alpha$ and tends to one as $d\varepsilon\to1$ whenever $p(1-d\varepsilon)<\tau$ is reachable.
\item (Evaluator scoring) An evaluator who certified the claim at rate $\hat r$ is scored by the quadratic score against the sealed outcomes; its expectation $p(1-p)+(\hat r-p)^2$ is uniquely minimized at $\hat r=p$, the deployment rate. Scoring against the distinguishable channel instead rewards $\hat r=p(1-d\varepsilon)$.
\end{enumerate}
\end{proposition}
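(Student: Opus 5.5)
The plan treats the three parts separately; each reduces to a short probabilistic computation once the right independence or mixture structure is established.

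For (i), I will first formalize the system's view at arrival $i$ as a random variable $\mathcal V_i$. It is a function of the arrivals $c_1,\dots,c_i$, the system's internal randomness, and public information, and the public information includes $\mathrm{com}$. The auditor draws $I$ uniformly and independently of the arrival process and the system. So the only channel through which $\mathbf 1\{i\in I\}$ could correlate with $\mathcal V_i$ is $\mathrm{com}$. Hiding gives that $\mathrm{com}$ is (computationally, or statistically for a perfectly hiding scheme) independent of $I$. Hence $\mathbf 1\{i\in I\}\perp\mathcal V_i$. Since the action, and hence the failure event $F_i$, is a function of $\mathcal V_i$ and the context $c_i$, we get $\Prob(F_i\mid i\in I)=\Prob(F_i)=\bar p(q^*)$.

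Binding ensures the revealed $I$ is the committed one, so the auditor cannot choose $I$ ex post after seeing outcomes. Linearity then gives $\E[K/n]=\bar p(q^*)$.

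For the level-$\alpha$ claim I need more than marginal unbiasedness: $K$ should be $\mathrm{Bin}(n,\bar p(q^*))$, or at least stochastically dominate a binomial with mean at least $\tau$. I would argue this under i.i.d.\ contexts and a stationary policy. Then $K$ is a uniform subsample without replacement from exchangeable Bernoulli outcomes, and averaging over $N$ makes the audited outcomes i.i.d.\ Bernoulli$(\bar p(q^*))$. Under the null, the exact Clopper--Pearson guarantee $P_n\le\alpha$ from Section~\ref{sec:claim} applies.

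The main obstacle is exactly this step. With an adaptive, history-dependent policy, the $F_i$ need not be independent. The fallback I would try first is to condition on the full outcome sequence and use that $I$ is independent of it. This makes $K$ hypergeometric given the realized failure count. I would then invoke the fact that hypergeometric tails are dominated by binomial tails (Hoeffding's comparison), which bounds $\Prob\{K\le k^*\}$ by $\alpha$ whenever the realized in-population rate is at least $\tau$. I would state that this controls error relative to the realized deployment rate, and flag that if the null is about the expected rate, an additional concentration step is needed.

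For (ii), I will compute directly. Each audit sample is independently recognized with probability $d$, so it fails with probability $d\,p(1-\varepsilon)+(1-d)p=p(1-d\varepsilon)$, and independence across samples gives $K\sim\mathrm{Bin}(n,p(1-d\varepsilon))$. The false-certification probability is therefore the stated binomial CDF at $k^*$. To show it is not controlled by $\alpha$, I will take $p\ge\tau$ with $p(1-d\varepsilon)<\tau$. As $d\varepsilon\to1$, the mean $np(1-d\varepsilon)\to0$, so $\Prob\{K\le k^*\}\ge\Prob\{K=0\}=(1-p(1-d\varepsilon))^n\to1$.

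For (iii), I will write the quadratic score of a forecast $\hat r$ against a Bernoulli outcome $Y$ as $(\hat r-Y)^2$. The bias--variance identity gives $\E(\hat r-Y)^2=\mathrm{Var}(Y)+(\hat r-\E Y)^2$. By (i), $\E Y=p$ on the sealed channel, which yields $p(1-p)+(\hat r-p)^2$, uniquely minimized at $\hat r=p$. Substituting $\E Y=p(1-d\varepsilon)$ from (ii) shows that the distinguishable channel's minimizer is $p(1-d\varepsilon)$.
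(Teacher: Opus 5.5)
Your proposal is correct and follows the paper's proof essentially step for step. Part (i) uses independence of $I$ from the arrivals plus hiding, with binding for selection validity; part (ii) uses the mixture computation $d\,p(1-\varepsilon)+(1-d)p$; part (iii) uses the bias--variance expansion of $\E(\hat r-Y)^2$. For the level-$\alpha$ step, the paper simply cites the Clopper--Pearson guarantee at the deployment rate, implicitly under the i.i.d.\ per-context failure model of Section~\ref{sec:claim} that you make explicit, so your hypergeometric fallback for adaptive policies goes beyond what the paper proves. If you pursue it, note that Hoeffding's convex-order comparison controls convex functionals such as exponential moments, not the exact tail $\Prob\{K\le k^*\}$, so you would need a pointwise hypergeometric--binomial tail comparison.
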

\begin{proof}
(i) $I$ is drawn uniformly and independently of all arrivals; the commitment is hiding, so the public transcript before the reveal is computationally independent of $I$; the system's view is therefore independent of membership, and conditioning on $i\in I$ does not change its failure probability. Binding prevents the auditor from altering $I$ after outcomes are seen, which is the selection-validity condition of \cite{devadoss2026beyond}. The error bound is the Clopper--Pearson guarantee at the deployment rate. (ii) Each distinguishable sample fails with marginal probability $d\,p(1-\varepsilon)+(1-d)p=p(1-d\varepsilon)$ independently; the certification event is $\{K\le k^*\}$. Monotonicity of the binomial c.d.f.\ in $p$ gives the limit. (iii) Expand $\E(\hat r-Y)^2$ with $Y\sim\mathrm{Bernoulli}(p)$.
\end{proof}

The construction requires that outcomes of sampled deployment traffic can be adjudicated after the fact and that adjudication activity is not itself observable to the system at decision time. Both are costs, not obstacles; they are the price of an audit that cannot be gamed by recognition, and they enter the lifecycle loss. Part (iii) is the result we regard as most useful institutionally: the sealed sample that audits the deployer also scores the evaluator, so accreditation of evaluators by decision consequence \cite{devadoss2026beyond} needs no second data stream.

\subsection{Sustaining the Error Budget Across Re-entry}\label{sec:m3}
A deployment that is withdrawn and re-enters $J$ times is a family of $J$ certification tests. Three schedules are in use or proposed. \emph{Halving} sets $\alpha_j=\alpha^*2^{-j}$, which controls the family-wise probability of any false certification by the union bound. \emph{Fixed reset} sets $\alpha_j=\alpha^*$ and controls nothing across episodes. \emph{$\alpha$-investing} \cite{foster2008alpha} holds a wealth $W_j$, spends $\alpha_j\le W_{j-1}/(1+W_{j-1})$, earns $\omega\le\alpha^*$ on each certification and loses $\alpha_j/(1-\alpha_j)$ otherwise; it controls the marginal false-discovery rate $\mathrm{mFDR}_\eta=\E[\#\text{false certifications}]/(\E[\#\text{certifications}]+\eta)$ at $\alpha^*$ with $\eta=1-\alpha^*$.

\begin{proposition}[Budget schedules]\label{prop:3}
Let an acceptable policy fail at rate $p_a<\tau$ and let coverage at episode $j$ be $\mathrm{cov}_j=P_n$ evaluated at $\bar p=p_a$ with level $\alpha_j$.
\begin{enumerate}[label=(\roman*)]
\item Under halving, $\Prob\{\text{any false certification in }J\text{ episodes}\}\le\alpha^*$ for all $J$ without independence, and $\mathrm{cov}_j$ is nonincreasing with $\lim_{j\to\infty}\mathrm{cov}_j\le(1-p_a)^n$.
\item Under fixed reset with $u$ unacceptable boundary episodes ($\bar p=\tau$), the family-wise false-certification probability is $1-(1-\alpha_u)^u$ where $\alpha_u=P_n(\tau)\le\alpha^*$, which tends to one as $u$ grows.
\item Under $\alpha$-investing with $\omega\le\alpha^*$, $\mathrm{mFDR}_{1-\alpha^*}\le\alpha^*$ at every stopping time \cite{foster2008alpha}; the family-wise probability is not controlled.
\item (Firm-level allocation) If a regulator fixes a firm-wide $\alpha^*$ and the firm allocates $\alpha_i=w_i\alpha^*$ across deployments with $\sum_iw_i\le1$, the family-wise guarantee is preserved for any weights \cite{genovese2006false}; the firm may therefore concentrate the budget where its private information says certification is most valuable.
\end{enumerate}
\end{proposition}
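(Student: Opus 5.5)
We prove the four parts separately. Each reduces to a monotonicity property of the exact test or to a standard union-bound or martingale fact.

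\emph{Part (i).} The family-wise bound is the union bound. At episode $j$ the exact test gives $\Prob\{\text{false certification at }j\}\le\alpha_j$ whenever $\bar p\ge\tau$, and this holds conditionally on the past because each audit is fresh. Summing over $j$ gives $\sum_j\alpha^*2^{-j}\le\alpha^*$ for every $J$. The union bound needs no independence, and if the index starts at $j=0$ the constant is absorbed by starting at $j=1$.

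For coverage we use that the Clopper--Pearson bound $U(k,n)=\mathrm{Beta}^{-1}(1-\alpha;k+1,n-k)$ is nonincreasing in $\alpha$. Hence $k^*(n,\tau,\alpha_j)$ is nondecreasing in $\alpha_j$, so it is nonincreasing in $j$. Because $P_n$ at fixed $\bar p=p_a$ is a binomial c.d.f.\ evaluated at $k^*$, $\mathrm{cov}_j$ is nonincreasing.

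For the limit, note that as $\alpha_j\to0$ the quantile $\mathrm{Beta}^{-1}(1-\alpha_j;k+1,n-k)\to1$ for every $k\ge0$ with $n-k\ge1$. So eventually $U>\tau$ for every $k\ge1$, and for $k=0$ as well, since $U(0,n)=1-\alpha_j^{1/n}\to1$. Thus $k^*$ is eventually at most $0$, or the claim is never certified. Therefore $\mathrm{cov}_j\le\Prob\{\mathrm{Bin}(n,p_a)=0\}=(1-p_a)^n$ for large $j$. The main care point is the convention for $k^*$ when no $k$ certifies: set $k^*=-1$, which gives coverage $0$ and only strengthens the bound.

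\emph{Part (ii).} With fixed reset each boundary episode is certified with probability exactly $\alpha_u=P_n(\tau)$. This value is at most $\alpha^*$ by the exact-test guarantee. Treating the $u$ fresh audits as independent, the probability of no false certification is $(1-\alpha_u)^u$, and the complement gives the formula. It tends to $1$ whenever $\alpha_u>0$, which holds as soon as $k^*\ge0$. If $k^*<0$ the test never certifies, and we state that degenerate case explicitly.

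\emph{Part (iii).} We cite Foster and Stine \cite{foster2008alpha}. Their argument shows that
\[
A_j=\alpha^*R_j-V_j+\eta\alpha^*-W_j
\]
is a submartingale under the spending and payout rule, where $R_j$ counts certifications, $V_j$ counts false ones, and $W_0\le\eta\alpha^*$ is the initial wealth. The constraint $\omega\le\alpha^*$ is the condition they require. Optional stopping then gives $\E V_T\le\alpha^*(\E R_T+\eta)$. To show that the family-wise probability is not controlled, it suffices to give an example. Each certification earns $\omega$, so wealth, and with it the per-episode level, stays bounded below along sequences with frequent certifications. The probability of at least one false certification among infinitely many null episodes interleaved with true ones then tends to $1$, by the computation in (ii).

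\emph{Part (iv).} This is again the union bound: $\sum_i\Prob\{\text{false cert}_i\}\le\sum_iw_i\alpha^*\le\alpha^*$. The result holds for any weights, including weights chosen from private information, provided the weights are fixed before the audit outcomes are drawn. We flag that measurability condition as the only subtlety, matching the weighted-$p$-value argument of \cite{genovese2006false}.

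The main obstacle is the limit in (i), specifically the behaviour of the beta quantile as $\alpha_j\to0$ for each fixed $k$.
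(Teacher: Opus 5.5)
Your proposal is correct and follows the paper's proof: the union bound over $\sum_j\alpha^*2^{-j}$ for (i) and weighted Bonferroni for (iv), monotonicity of $k^*$ in $\alpha_j$ together with $U(0,n)=1-\alpha_j^{1/n}\to1$ for the coverage limit, independence of fresh audits for (ii), and the Foster--Stine theorem for (iii). Your additions are refinements rather than a different route: the $k^*=-1$ convention (cleaner than the paper's phrasing that $k^*$ ``reaches $0$'' once $U(0,n)>\tau$), the explicit submartingale with the initial-wealth condition $W_0\le\eta\alpha^*$, the sketch of family-wise non-control under $\alpha$-investing (valid for a spending rule that keeps the level bounded below), and the requirement that firm weights be fixed before audit outcomes.
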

\begin{proof}
(i) Union bound over $\sum_j\alpha^*2^{-j}\le\alpha^*$. As $\alpha_j\downarrow0$, $k^*(n,\tau,\alpha_j)$ is nonincreasing and reaches $0$ once $U(0,n)=1-\alpha_j^{1/n}>\tau$; at $k^*=0$, $\mathrm{cov}_j=(1-p_a)^n$, and for smaller $\alpha_j$ no certification is possible. (ii) Independence across episodes of the audit draws. (iii) Theorem of \cite{foster2008alpha}. (iv) Weighted Bonferroni.
\end{proof}

The policy reading is a trade, not a ranking. Halving gives the strongest guarantee and starves the deployment; $\alpha$-investing sustains coverage under a guarantee on the \emph{fraction} of certified deployments that are unacceptable, which is arguably the quantity a regulator cares about; fixed reset, the de facto practice of re-running a level-$\alpha$ test at each re-entry, gives nothing. Part (iv) says the regulator can set the family-wide level and leave allocation to the firm without loss of guarantee, which is the decentralization a use-based regime needs. E-processes \cite{vovk2021evalues,ramdas2023game} give the within-episode anytime-valid audit and combine multiplicatively; the cross-episode allocation remains a budgeting choice and is what Proposition~\ref{prop:3} addresses.

\subsection{Pricing a Per-Subgroup Guarantee}\label{sec:m4}
Let a subgroup $G$ of deployment mass $m$ fail at rate $p_G$ while the remainder fails at $p_0<\tau$. The aggregate rate is $\bar p=(1-m)p_0+mp_G$.

\begin{proposition}[Subgroup cost]\label{prop:4}
\begin{enumerate}[label=(\roman*)]
\item An aggregate audit can certify while $p_G>\tau$ whenever $m<(\tau-p_0)/(p_G-p_0)$, with probability $P_n$ at $\bar p$, decreasing in $m$.
\item Let $n_s^*$ be the least sample size at which an acceptable subgroup ($p_G=p_a$) is certified with probability at least $1-\beta_s$ at level $\alpha$. Under proportional sampling the expected subgroup sample in an audit of size $n$ is $nm$, so the total audit required for the subgroup guarantee is at least $n_s^*/m$.
\item For a class $\mathcal G$ of subgroups with minimum mass $m_{\min}$, a stratified audit with a union bound over $\mathcal G$ requires total size $\Omega\big(n_s^*(\alpha/|\mathcal G|)/m_{\min}\big)$; guarantees over computationally identifiable classes follow the sample-complexity logic of \cite{hebertjohnson2018multicalibration,kearns2018preventing}.
\end{enumerate}
\end{proposition}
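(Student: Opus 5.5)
The plan is to prove the three parts in order, each from facts already in place: the monotonicity of $P_n$ in $\bar p$ stated in Section~\ref{sec:claim}, and the binomial sampling structure of the audit.

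For (i), I will solve the aggregate constraint directly. The aggregate audit can certify only if the aggregate rate satisfies $\bar p\le\tau$, up to the randomness captured by $P_n$. Write
\[
\bar p=(1-m)p_0+mp_G=p_0+m(p_G-p_0).
\]
With $p_G>\tau>p_0$, the condition $\bar p<\tau$ is equivalent to $m<(\tau-p_0)/(p_G-p_0)$. In that regime the audit draws contexts from the deployment mixture, so the number of failures is $\mathrm{Bin}(n,\bar p)$. The certification probability is therefore exactly $P_n$ evaluated at $\bar p$. Since $\partial\bar p/\partial m=p_G-p_0>0$ and $P_n$ is strictly decreasing in $\bar p$, this probability is decreasing in $m$. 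This certification happens even though the subgroup violates $\tau$, and its probability is close to one when $n$ is large and $\bar p$ is well below $\tau$.

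For (ii), I will condition on the subgroup sample size. Under proportional sampling, the subgroup count is $n_G\sim\mathrm{Bin}(n,m)$, so $\E n_G=nm$. Certifying the subgroup with power $1-\beta_s$ requires, by definition of $n_s^*$, a subgroup sample of about $n_s^*$. I plan to state the requirement in expectation: $nm\ge n_s^*$, which gives $n\ge n_s^*/m$. To make this rigorous rather than heuristic, I will use that the power of the exact test at $p_a$ is a nondecreasing function of the sample size, and argue that requiring $\E n_G\ge n_s^*$ is a necessary condition for this power. The cleanest route is to adopt the convention that the guarantee is imposed on the expected subgroup sample. I expect this to be the main obstacle. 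Power is not monotone in $n$ for discrete binomial tests (the sawtooth effect), so a Jensen-type argument needs care. If it fails, I will fall back to stating the bound for the expected sample, which is what the statement phrases.

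For (iii), I will apply a Bonferroni split. To hold the guarantee simultaneously over $\mathcal G$ at family level $\alpha$, test each subgroup at level $\alpha/|\mathcal G|$, so each subgroup needs at least $n_s^*(\alpha/|\mathcal G|)$ samples. In a stratified audit drawing contexts from deployment traffic, the smallest group receives a fraction $m_{\min}$ of the draws. Part (ii) then gives total size at least $n_s^*(\alpha/|\mathcal G|)/m_{\min}$, hence the $\Omega(\cdot)$ bound. I will note that $n_s^*(\alpha/|\mathcal G|)$ grows only like $\log|\mathcal G|$, via the Chernoff form of the binomial tail. The remark on computationally identifiable classes is a pointer to \cite{hebertjohnson2018multicalibration,kearns2018preventing} and needs no proof.
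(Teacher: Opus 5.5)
Your proposal follows the paper's proof essentially step for step: in (i) you solve $\bar p<\tau$ for $m$ and use that $P_n$ decreases in $\bar p$ while $\bar p$ increases in $m$; in (ii) you equate the mean $nm$ of the $\mathrm{Bin}(n,m)$ subgroup count with $n_s^*$; and in (iii) you apply Bonferroni at level $\alpha/|\mathcal G|$ and divide by $m_{\min}$. Your fallback in (ii) is exactly the paper's step, which is likewise only a convention on the expected subgroup sample; the ``rigorous'' route you sketch would need a Jensen-type bound of the expected power by the power at sample size $nm$, i.e.\ concavity of the power curve rather than monotonicity, and the exact test's discrete, sigmoidal power curve does not have that in general, so the convention is the right place to land.
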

\begin{proof}
(i) Solve $\bar p<\tau$ for $m$; monotonicity of $P_n$. (ii) The subgroup count in a proportional audit is $\mathrm{Bin}(n,m)$ with mean $nm$; a certification at power $1-\beta_s$ needs $n_s^*$ subgroup samples. (iii) Bonferroni over $|\mathcal G|$ claims at level $\alpha/|\mathcal G|$ raises $n_s^*$; dividing by $m_{\min}$ gives the total.
\end{proof}

The regulatory consequence is a price list. A per-subgroup guarantee for a group of mass one percent costs one hundred times the subgroup's own audit, and an aggregate audit of ordinary size is not merely weak evidence for the subgroup claim but no evidence at all (Section~\ref{sec:s4}). A tier that covers decisions affecting individuals must either budget for stratified audits or state plainly that its guarantee is aggregate.

\subsection{Counting the Fallback}\label{sec:m5}
Let the system fail at rate $p_1$ after a deterioration, let the fallback fail at $p_f<p_1$ and cost $f$ per task, and let $T$ tasks arrive after the point at which an arrangement may withdraw. Withdrawal changes expected failures by $-(p_1-p_f)T$ and adds $fT$ to loss.

\begin{proposition}[Failure-only metrics]\label{prop:5}
Let an incident metric $M$ be any function of the arrangement that is nondecreasing in expected failures and independent of fallback volume. Then $M$ ranks withdrawal weakly above continued delegation for every $f\ge0$, whereas total loss ranks delegation above withdrawal iff $f>(p_1-p_f)$. Hence for all $f>p_1-p_f$ the two rankings disagree.
\end{proposition}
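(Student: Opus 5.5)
The proof compares the two arrangements on exactly two quantities, expected failures and fallback cost, over the $T$ post-deterioration tasks. Under continued delegation, expected failures are $p_1T$ and fallback volume is zero. Under withdrawal, expected failures are $p_fT$ and the fallback handles all $T$ tasks at cost $fT$. Total loss, with unit harm per failure as in Section~\ref{sec:claim}, is therefore $L_{\mathrm{del}}=p_1T$ and $L_{\mathrm{wd}}=p_fT+fT$. Everything else in the argument is a comparison of these numbers.

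For the metric $M$, we read ``independent of fallback volume'' as the statement that $M$ depends on the arrangement only through its expected failures. That is, $M=g(\text{expected failures})$ for some nondecreasing $g$, where lower $M$ is ranked better. Since $p_f<p_1$ gives $p_fT<p_1T$, monotonicity yields $M(\mathrm{wd})=g(p_fT)\le g(p_1T)=M(\mathrm{del})$. This holds for every $f\ge0$, because $f$ never enters $M$. This is the weak preference for withdrawal.

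We expect the main obstacle to be formalizing this reading of ``independent of fallback volume.'' If $M$ were allowed to depend on other features of the arrangement, such as its label, the claim could fail. We therefore state the reduction $M=g(\cdot)$ explicitly as the content of the hypothesis. We also note that only weak ranking is claimed, which covers a $g$ that is constant on $[p_fT,p_1T]$.

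For total loss, delegation is strictly preferred iff $p_1T<p_fT+fT$. Assuming $T>0$, this is iff $f>p_1-p_f$. Combining the two parts: for any $f>p_1-p_f$, total loss strictly prefers delegation while $M$ weakly prefers withdrawal. If $M$ is strictly increasing, the rankings strictly disagree. If $M$ ties, $M$ still fails to reproduce the strict preference given by total loss. In both cases we conclude that the rankings disagree on the entire interval $f>p_1-p_f$.
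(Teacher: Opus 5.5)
Your proposal is correct and matches the paper's own proof: withdrawal lowers expected failures by $(p_1-p_f)T>0$, so any metric nondecreasing in failures and blind to fallback weakly prefers withdrawal for every $f$, while total loss differs by $(f-(p_1-p_f))T$ and so favours delegation exactly when $f>p_1-p_f$. Your explicit reduction $M=g(\text{expected failures})$ and your treatment of the tie case make precise what the paper's two-line proof assumes without stating.
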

\begin{proof}
Expected failures under withdrawal are lower by $(p_1-p_f)T>0$, so $M$ prefers withdrawal regardless of $f$. Total loss differs by $(f-(p_1-p_f))T$.
\end{proof}

The result is elementary and the policy error it describes is common: serious-incident reporting that counts system failures and not the volume or cost of human fallback induces withdrawal exactly where withdrawal is the costlier outcome. The crossing $f^*=p_1-p_f$ is the number an enforcement regime must know before it can interpret a falling incident count as improvement.

\section{Adversarial Analysis}\label{sec:adv}
A mechanism is only as strong as the attack it answers. Table~\ref{tab:adv} pairs each mechanism with its attack and states which attacks remain open.

\begin{table}[t]
\caption{Attacks and answers}
\label{tab:adv}
\centering\footnotesize
\begin{tabular}{@{}p{2.1cm}p{2.5cm}p{2.5cm}@{}}
\toprule
Attack & Answer & Residual \\
\midrule
Declare a narrow or favourable mixture & Scoring-rule penalty on realized traffic (Prop.~\ref{prop:1}) & Excess failure bounded, not zero; requires traffic observable at the choke point \\
Recognize audit traffic and behave better & Commit-reveal audit from deployment traffic (Prop.~\ref{prop:2}) & Requires ex post adjudication; adjudication must be unobservable at decision time \\
Split one deployment into many to reset budgets & Firm-level budget with weighted allocation (Prop.~\ref{prop:3}(iv)) & Requires the enforcement point to know the firm, not only the deployment \\
Gerrymander subgroups & Class-based guarantees with stratified audits (Prop.~\ref{prop:4}(iii)) & Cost scales with $1/m_{\min}$ and $\log|\mathcal G|$; the class must be fixed ex ante \\
Report incident counts only & Loss metric that includes fallback volume (Prop.~\ref{prop:5}) & Requires fallback cost $f$ to be disclosed or estimated \\
Shade the configuration & Immutable digest in the record; runtime guard operated by the platform \cite{devadoss2026beyond} & Guard detects declared mismatches only, not unknown assumptions \\
\bottomrule
\end{tabular}
\end{table}

Two attacks have no answer in this paper. A deployer who controls the \emph{adjudication} of sealed outcomes can corrupt Proposition~\ref{prop:2} at the source; adjudication must therefore be independent, which is an institutional requirement and not a statistical one. And a system whose deployment distribution depends on its own policy---performative in the sense of \cite{perdomo2020performative}---changes $q^*$ between audit and deployment; the scoring-rule penalty then charges the deployer for a drift it did not declare but may have caused. We flag both as open.

\section{Experiments}\label{sec:exp}
\subsection{Scope, Preregistration and Reproducibility}
All five studies are synthetic. No language model, production workflow, human participant or model API is evaluated; the studies establish the behaviour of the mechanisms under declared simulators. A protocol file fixing every parameter, the base seed $20261019$, and the primary outputs was written before the first run. Two amendments were made after the first run and are recorded in the protocol with their reasons: Study~1 gained the mechanism-outcome curve needed to state Proposition~\ref{prop:1}(iii) as a bound on realized harm, and Study~3 gained a boundary scenario ($p=\tau$) because the preregistered separated scenario ($p=0.08$) cannot exhibit family-wise inflation with an exact test at $n=400$. Intervals are pointwise 95\% Student-$t$ over independent worlds or replicates, with no multiplicity adjustment. Where a quantity is a binomial probability it is computed exactly and the Monte Carlo check is reported. The source package regenerates every number and figure from the protocol, and an independent verification script recomputes the reported statistics from the saved results.

Throughout, $\tau=0.05$, $\alpha=0.025$ and $n=400$ unless stated; at these values $k^*=\Skstar$.

\subsection{Study 1: Declaration Under a Scoring-Rule Penalty}\label{sec:s1}
We generate $500$ worlds with $C=8$ contexts, $q^*\sim\mathrm{Dirichlet}(1,\dots,1)$ and $p_c\sim\mathrm{Uniform}(0.01,0.12)$. The deployer's declaration is $\hat q(w)=(1-w)q^*+w\,q_{\mathrm{fav}}$ with $q_{\mathrm{fav}}\propto q^*_c\mathbf 1\{p_c\le\tau\}$ and $w$ on a grid from $0$ to $1$; $V=1$. Certification probability and expected score are computed exactly.

\begin{figure*}[t]
\centering\includegraphics[width=\textwidth]{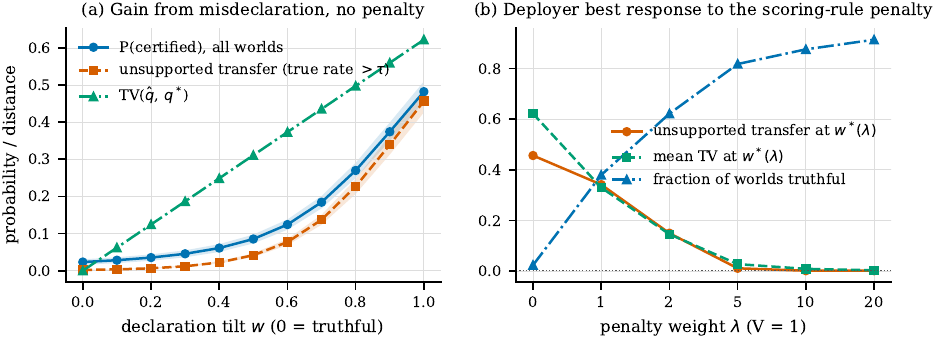}
\caption{Study 1. (a) Without a penalty, tilting the declared mixture raises the certification probability from $0.023$ to $0.48$; in the $\SoneBad$ of $500$ worlds whose true deployment rate exceeds $\tau$, the probability of an unsupported certification rises from $\SoneUnsupTruth$ to $\SoneUnsupZero$. Shaded bands are pointwise 95\% intervals over worlds. (b) Deployer best response to the penalty weight $\lambda$: unsupported transfer, mean $\TV$ at the best response, and the fraction of worlds in which the truthful declaration is optimal.}
\label{fig:1}
\end{figure*}

\begin{table}[t]
\caption{Study 1: deployer best response by penalty weight ($500$ worlds, $V=1$)}
\label{tab:s1}
\centering\footnotesize
\begin{tabular}{@{}rcccc@{}}
\toprule
$\lambda$ & Frac.\ truthful & Mean $\TV$ & P(cert.) & Unsupported \\
\midrule
0 & 0.024 & 0.622 & 0.482 & 0.4568 \\
1 & 0.380 & 0.330 & 0.384 & 0.3419 \\
2 & 0.622 & 0.146 & 0.216 & 0.1500 \\
5 & 0.818 & 0.028 & 0.071 & 0.0116 \\
10 & 0.876 & 0.009 & 0.039 & 0.0021 \\
20 & 0.914 & 0.004 & 0.031 & 0.0020 \\
\bottomrule
\end{tabular}

\end{table}

Fig.~\ref{fig:1} and Table~\ref{tab:s1} give the result. With no penalty the deployer tilts almost fully ($\SoneTruthZero$ of worlds truthful) and the unsupported-transfer rate is $\SoneUnsupZero$. At $\lambda=5$ the rate is $\SoneUnsupFive$ and at $\lambda=10$ it is $\SoneUnsupTen$, against $\SoneUnsupTruth$ under truthful declaration; mean $\TV$ at the best response falls roughly as $1/\lambda$ between $\lambda=10$ and $\lambda=20$, consistent with Proposition~\ref{prop:1}(iv). The minimal penalty that makes the truthful declaration optimal on the grid has median $\SoneLsMed$ across worlds but a heavy tail (90th percentile $\SoneLsNinety$, maximum $\SoneLsMax$), which is the finite-sample face of Proposition~\ref{prop:1}(iv): worlds exist in which a small tilt crosses the certification threshold at second-order cost. The regulator does not need those worlds to be truthful; it needs the excess failure they can hide to be small, and at $\lambda=10$ it is.

\subsection{Study 2: Sealed Versus Distinguishable Audit}\label{sec:s2}
For true deployment failure $p\in\{0.06,0.08,0.10\}$, all above $\tau$, and detection probability $d\in\{0,0.25,0.5,0.75,1\}$ with $\varepsilon=0.6$, we compute the false-certification probability on each channel exactly (Table~\ref{tab:s2}, Fig.~\ref{fig:2}). A Monte Carlo check with $20{,}000$ replicates at $p=0.08$, $d=0.5$ gives $\StwoMCd$ against the exact $\StwoMCexact$.

\begin{table}[t]
\caption{Study 2: false-certification probability, $n=400$, $\varepsilon=0.6$}
\label{tab:s2}
\centering\footnotesize
\begin{tabular}{@{}lccccc|c@{}}
\toprule
& \multicolumn{5}{c|}{Distinguishable, $d=$} & Sealed \\
$p$ & 0 & 0.25 & 0.5 & 0.75 & 1 & any $d$ \\
\midrule
0.06 & 0.002 & 0.015 & 0.087 & 0.330 & 0.743 & 0.0020 \\
0.08 & 0.000 & 0.000 & 0.005 & 0.061 & 0.371 & 0.0000 \\
0.1 & 0.000 & 0.000 & 0.000 & 0.006 & 0.122 & 0.0000 \\
\bottomrule
\end{tabular}

\end{table}

\begin{figure}[t]
\centering\includegraphics[width=\columnwidth]{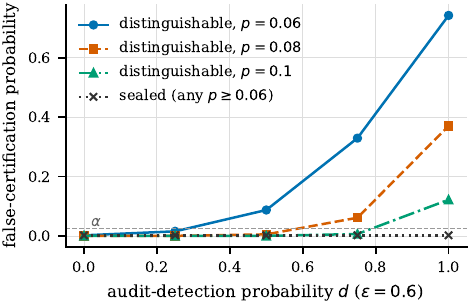}
\caption{Study 2. False certification on the distinguishable channel rises with the detection probability $d$; the sealed channel is flat at the exact-test level for every $p\ge0.06$.}
\label{fig:2}
\end{figure}

At $p=0.06$ a fully recognizing system is falsely certified with probability $\StwoFullSix$ and a half-recognizing one with $\StwoHalfSix$; the sealed channel is at $\StwoSealedSix$. At $p=0.08$ full recognition still yields $\StwoFullEight$. The effect is not a tail phenomenon; it is the main effect once $p(1-d\varepsilon)$ crosses $\tau$. The evaluator-scoring expectations of Proposition~\ref{prop:2}(iii) are reported in the source package.

\subsection{Study 3: Error Budgets Across Re-entry}\label{sec:s3}
Over $J=24$ re-entry episodes and $5{,}000$ replicate sequences, each episode's policy is acceptable ($p=0.02$) with probability $0.7$ and otherwise unacceptable at $p=0.08$ (separated) or $p=\tau$ (boundary); $\alpha^*=0.05$, $n=400$, $\phi=0.5$, $\omega=\alpha^*$. Failure counts are shared across schedules within a replicate.

\begin{table}[t]
\caption{Study 3: schedules over 24 re-entry episodes ($5{,}000$ replicates)}
\label{tab:s3}
\centering\footnotesize
\begin{tabular}{@{}llcccc@{}}
\toprule
Scenario & Schedule & FWER & mFDR & Coverage & Mean cert. \\
\midrule
Separated & Halving & 0.0000 & 0.0000 & 0.210 & 3.52 \\
Separated & Fixed reset & 0.0002 & 0.0000 & 0.939 & 15.78 \\
Separated & $\alpha$-investing & 0.0008 & 0.0001 & 0.919 & 15.45 \\
Boundary & Halving & 0.0096 & 0.0027 & 0.211 & 3.56 \\
Boundary & Fixed reset & 0.2306 & 0.0160 & 0.938 & 16.03 \\
Boundary & $\alpha$-investing & 0.3378 & 0.0270 & 0.921 & 15.91 \\
\bottomrule
\end{tabular}

\end{table}

\begin{figure*}[t]
\centering\includegraphics[width=\textwidth]{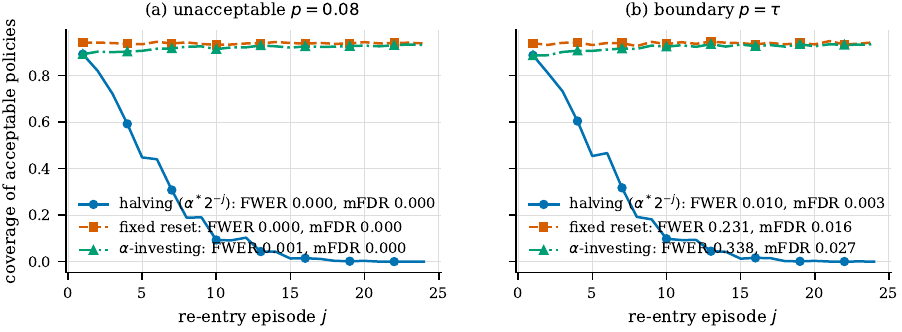}
\caption{Study 3. Coverage of acceptable policies by episode under the three schedules, with the realized family-wise false-certification probability (FWER) and marginal false-discovery rate (mFDR). (a) Unacceptable policies at $p=0.08$. (b) Unacceptable policies at the boundary $p=\tau$.}
\label{fig:3}
\end{figure*}

Table~\ref{tab:s3} and Fig.~\ref{fig:3} show the trade. Halving holds the family-wise rate at $\SthreeHalvFWER$ in the boundary scenario but covers only $\SthreeHalvCovB$ of acceptable episodes, with coverage near zero after the fifteenth re-entry, as Proposition~\ref{prop:3}(i) predicts at $n=400$ where $(1-0.02)^{400}\approx3\times10^{-4}$. Fixed reset covers $0.94$ but falsely certifies at least once in $\SthreeResetFWER$ of sequences. $\alpha$-investing covers $\SthreeAICovB$ with mFDR $\SthreeAImFDR\le\alpha^*$ and a family-wise rate of $\SthreeAIFWER$, which it does not claim to control. In the separated scenario none of the schedules falsely certifies at a detectable rate and the only difference is coverage ($\SthreeHalvCovSep$ for halving against $\SthreeAICovSep$ for $\alpha$-investing); the boundary scenario is the one on which the guarantees are stated and the one a regulator should examine.

\subsection{Study 4: Aggregate Audits and Subgroups}\label{sec:s4}
A subgroup of mass $m\in\{0.01,\dots,0.2\}$ fails at $0.15$ while the remainder fails at $0.02$. Table~\ref{tab:s4} and Fig.~\ref{fig:4} give the exact aggregate certification probability at $n\in\{400,1600\}$, the least subgroup sample $n_s^*$ at which an acceptable subgroup is certified with probability $0.8$, and the resulting total audit under proportional sampling.

\begin{table}[t]
\caption{Study 4: aggregate certification while the subgroup fails at $0.15$; required total audit for a subgroup guarantee ($n_s^*=\SfourNs$)}
\label{tab:s4}
\centering\footnotesize
\begin{tabular}{@{}ccccr@{}}
\toprule
$m$ & $\bar p$ & P(cert.), $n{=}400$ & P(cert.), $n{=}1600$ & $n_s^*/m$ \\
\midrule
0.01 & 0.0213 & 0.849 & 1.000 & 31,200 \\
0.02 & 0.0226 & 0.801 & 1.000 & 15,600 \\
0.05 & 0.0265 & 0.628 & 0.998 & 6,240 \\
0.10 & 0.0330 & 0.330 & 0.910 & 3,120 \\
0.20 & 0.0460 & 0.042 & 0.090 & 1,560 \\
\bottomrule
\end{tabular}

\end{table}

\begin{figure*}[t]
\centering\includegraphics[width=\textwidth]{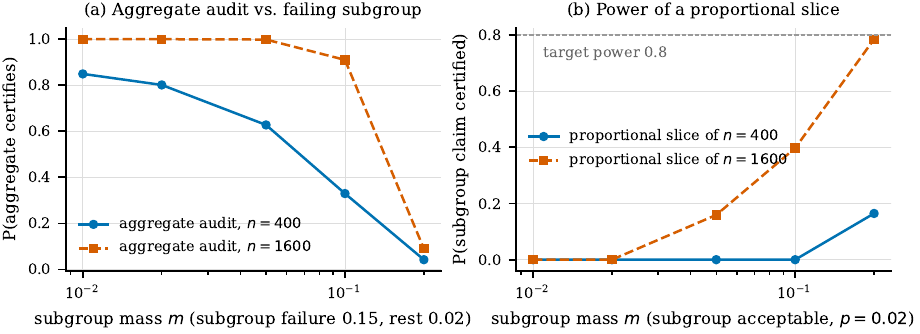}
\caption{Study 4. (a) Probability that an aggregate audit certifies a deployment whose subgroup of mass $m$ fails at three times the threshold. (b) Probability that the subgroup slice of a proportional audit certifies an \emph{acceptable} subgroup: at $n=400$ the slice has no power for any $m\le0.1$.}
\label{fig:4}
\end{figure*}

At $m=0.01$ an aggregate audit of $400$ certifies with probability $\SfourAggOne$ and one of $1600$ with probability $\SfourAggOneBig$, while the subgroup fails at three times the threshold; the per-subgroup guarantee would require a total audit of $\SfourReqOne$. Fig.~\ref{fig:4}(b) makes the stronger point: the subgroup slice of a $400$-sample audit cannot certify even an acceptable subgroup for any $m\le0.1$, so the aggregate audit is not weak evidence about the subgroup but none.

\subsection{Study 5: Incident Metrics and Fallback Cost}\label{sec:s5}
Over $2{,}000$ streams of $2{,}000$ arrivals, the failure rate shifts at arrival $1{,}000$ from $0.02$ to $p_1\in\{0.08,0.20\}$; the fallback fails at $0.01$ and costs $f$ per task. The monitored arrangement runs the mixture supermartingale of \cite{devadoss2026beyond} at $\beta=0.05$ and withdraws to fallback on alarm. Under the large shift the monitor alarms in every stream with mean delay $\SfiveDelayTwenty$ arrivals; under the small shift it alarms in $\SfiveDetEight$ of streams with mean delay $\SfiveDelayEight$.

\begin{table}[t]
\caption{Study 5: failures and total loss, $2{,}000$ streams}
\label{tab:s5}
\centering\footnotesize
\begin{tabular}{@{}ccccccr@{}}
\toprule
$p_1$ & $f$ & Fail.\ del. & Fail.\ mon. & Loss del. & Loss mon. & $\Delta$ \\
\midrule
0.08 & 0.00 & 100.2 & 84.1 & 100.2 & 84.1 & -16.1 \\
0.08 & 0.06 & 100.2 & 84.1 & 100.2 & 97.9 & -2.3 \\
0.08 & 0.08 & 100.2 & 84.1 & 100.2 & 102.5 & +2.3 \\
0.08 & 0.15 & 100.2 & 84.1 & 100.2 & 118.6 & +18.4 \\
0.08 & 0.20 & 100.2 & 84.1 & 100.2 & 130.1 & +30.0 \\
0.08 & 0.30 & 100.2 & 84.1 & 100.2 & 153.2 & +53.0 \\
0.2 & 0.00 & 219.9 & 40.6 & 219.9 & 40.6 & -179.3 \\
0.2 & 0.06 & 219.9 & 40.6 & 219.9 & 97.2 & -122.7 \\
0.2 & 0.08 & 219.9 & 40.6 & 219.9 & 116.1 & -103.8 \\
0.2 & 0.15 & 219.9 & 40.6 & 219.9 & 182.2 & -37.7 \\
0.2 & 0.20 & 219.9 & 40.6 & 219.9 & 229.4 & +9.5 \\
0.2 & 0.30 & 219.9 & 40.6 & 219.9 & 323.9 & +103.9 \\
\bottomrule
\end{tabular}

\end{table}

\begin{figure*}[t]
\centering\includegraphics[width=\textwidth]{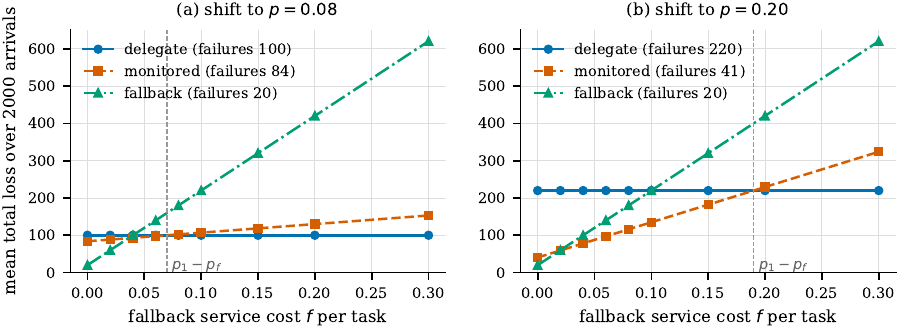}
\caption{Study 5. Mean total loss by fallback service cost $f$ for delegation throughout, the monitored arrangement, and fallback throughout. Failure counts, which do not vary with $f$, are in the legend. The dashed line is the analytic crossing $p_1-p_f$ of Proposition~\ref{prop:5}.}
\label{fig:5}
\end{figure*}

Table~\ref{tab:s5} and Fig.~\ref{fig:5} confirm Proposition~\ref{prop:5} in both directions. Under the large shift the monitored arrangement cuts failures from $\SfiveFailDelTwenty$ to $\SfiveFailMonTwenty$ at every $f$, yet its paired total loss exceeds delegation by $\SfivePairedTwentyAtTwenty$ at $f=0.20$ after being lower by $\SfivePairedTwentyAtFifteen$ at $f=0.15$; the crossing sits at the analytic $0.19$. Under the small shift failures fall from $\SfiveFailDelEight$ to $\SfiveFailMonEight$ while the loss ranking reverses at $f\approx0.07$, and by $f=0.10$ the monitored arrangement is worse by $\SfivePairedEightAtTen$. A regime observing only the first two columns would report improvement in every row.

\section{Institutional and Doctrinal Translation}\label{sec:doctrine}
\subsection{Who Declares, Who Holds, Who Verifies}
The five mechanisms assign the parameters of Definition~\ref{def:claim} to parties. The deployer declares $\hat q$ and $a$ and is charged on the realized mixture (Section~\ref{sec:m1}). An independent adjudicator draws and seals the audit, and the same sample scores the certifying evaluator (Section~\ref{sec:m2}). The regulator fixes a firm-wide $\alpha^*$ and the firm allocates it (Section~\ref{sec:m3}). The regulator fixes the subgroup class $\mathcal G$ for each tier and the deployer pays the stratified audit (Section~\ref{sec:m4}). The enforcement point requires disclosure of $f$ and counts fallback volume in the incident record (Section~\ref{sec:m5}).

Enforcement is located where \cite{devadoss2026spectrum} locates it: at distribution (app stores and enterprise marketplaces), capability access (cloud and AI platforms), monetization (payment systems and advertising networks) and risk transfer (insurers and counterparties). Each of these can demand the conformance record as a condition of service and each can observe the realized mixture the record is scored against. A platform operating the runtime guard, rather than the deployer, closes the configuration-shading attack of Section~\ref{sec:adv}. The choke-point owners thereby become de facto verifiers; the governance of \emph{those} parties is outside this paper and we flag it as the principal institutional residual.

\subsection{Expiry as a Standard of Care}
Negligence per se attaches when a defendant violates a statute or regulation designed to protect against the type of harm that occurred \cite{restatement3torts}. A conformance record that is required by regulation and that declares a horizon $H$, a monitoring level $\beta$ and a fallback $\pi_f$ supplies such a standard. Operation past $H$ without re-entry, or past an alarm without withdrawal, is then a violation of a filed duty rather than a question for a reasonableness inquiry, and the record itself fixes the standard of care at the moment of deployment. Instruments whose validity is conditioned on continued compliance rather than fixed by time already exist: a standard airworthiness certificate remains effective only while maintenance is performed in accordance with the applicable rules \cite{cfr21181}, and accelerated drug approval is conditioned on postmarketing confirmatory studies and may be withdrawn when they fail \cite{cfr314510}. The conformance record is the AI analogue: delegated authority that lapses by its own terms. On the liability-versus-regulation question \cite{shavell1984liability}, the record's role is to make the regulatory standard sharp enough that liability can be assessed against it, which is the combination that instrument theory recommends where harms are observable but ex ante inspection is costly.

\subsection{Conformance as a Rate Versus Conformity Assessment}
The EU AI Act's conformity assessment for high-risk systems \cite{euaiact2024} is conformance as a property: assessed before placing on the market, affirmed by a CE marking, and renewed on substantial modification. Its post-market monitoring and serious-incident reporting duties attach obligations after deployment but carry no error-budget semantics and no expiry: nothing in the instrument says at what confidence a claim holds, for how long, or what fraction of certified systems may be unacceptable. The present framework supplies those semantics without displacing the Act's structure. The configuration digest operationalizes ``substantial modification'': any change to $a$ is one. The monitoring level $\beta$ and horizon $H$ give post-market monitoring a null and an expiry. The fallback-inclusive loss metric gives serious-incident reporting a denominator. We regard the difference as the difference between a certificate and a claim with a confidence and a shelf life, and we note that the Act's own text on modification is closer to the second than its practice.

\section{Limitations and a Falsifiable Next Study}\label{sec:limits}
The results are propositions and synthetic illustrations, not empirical findings about deployed systems. Proposition~\ref{prop:1} assumes the enforcement point observes realized traffic; where it does not, the penalty cannot be assessed. Proposition~\ref{prop:2} assumes independent adjudication and unobservable adjudication activity; both are institutional conditions. Proposition~\ref{prop:3} controls one of two error rates per schedule and the choice between them is a policy judgment the paper does not make. Proposition~\ref{prop:4} prices guarantees for a fixed class; a class chosen after the fact is gerrymandering. Proposition~\ref{prop:5} takes $f$ as known. Severity weights, grader error, feedback delay and performative drift are absent from the studies. The simulations were designed after the propositions were drafted and are not an independent test of them; they quantify magnitudes under declared parameters.

The next study should implement the sealed audit against a real software-agent configuration in an environment where adjudication is by executable checks, with the commitment published before each episode; vary $d$ by giving the system access to features that leak audit membership and measure the gap between the two channels; and run the $\alpha$-investing and halving schedules over at least twenty real re-entries with a held-out world family built by an independent team. The integration hypothesis is rejected if the sealed and distinguishable channels do not separate when membership is leaked, or if $\alpha$-investing's realized mFDR exceeds $\alpha^*$ over the re-entry sequence.

\section{Conclusion}
A regulatory conformance claim is a failure rate under a declared mixture, at a declared confidence, within a declared scope, until a declared expiry. We have shown what each clause of that definition costs when a second party has an interest in it: truthfulness of the mixture is bought as a bound, not a guarantee; unbiasedness of the audit is bought by commitment and adjudication; sustainability of the budget is bought by choosing which error rate to control; fairness is bought at a price inversely proportional to the mass of the group; and enforceability is bought by counting the fallback. Each is a design choice a regulator can make explicitly, and none can be made by the deployer alone.

\appendix
\section{Reproduction and Verification}
The source package contains \texttt{protocol.json} (parameters, preregistered before the first run, with two recorded amendments), \texttt{experiments.py} (the five studies; base seed $20261019$ with study offsets $100$, $200$, $300$, $400$ and $500$), \texttt{figures.py} (every figure, table and in-text number is generated from \texttt{results/}), and \texttt{verify.py}, which recomputes the binomial certification probabilities, the Study~3 error-rate statistics and the Study~5 crossings independently of the experiment module and checks them against the saved results. The implementation uses Python~3.12, NumPy~2.5 and SciPy~1.18. Two passes of \texttt{pdflatex} with \texttt{bibtex} compile the manuscript.

\bibliographystyle{ieeetr}
\bibliography{refs}
\end{document}